\newcommand{\R}{{\mathbb{R}}}
\newtheorem{theorem}{Theorem}
\title{Renormalized Sparse Neural Network Pruning}
\author{%
  Michael G. Rawson \\
  Pacific Northwest National Lab and \\
  University of Maryland at College Park\\
  \texttt{michael.rawson@pnnl.gov} \\
}
\begin{document}

\maketitle

\begin{abstract}
Large neural networks are heavily over-parameterized. This is done because it improves training to optimality. However once the network is trained, this means many parameters can be zeroed, or pruned, leaving an equivalent sparse neural network. We propose renormalizing sparse neural networks in order to improve accuracy. We prove that our method's error converges to zero as network parameters cluster or concentrate. We prove that without renormalizing, the error does not converge to zero in general. We experiment with our method on real world datasets MNIST, Fashion MNIST, and CIFAR-10 and confirm a large improvement in accuracy with renormalization versus standard pruning.
\end{abstract}

\section{Introduction}
Sparse neural networks are being studied more as neural networks get deployed to edge devices and internet-of-things devices \cite{mao2017exploring}. Sparse neural networks have many parameters set to zero to decrease both computation and memory requirements. The simplest and most common pruning method is to train a neural network and then set the smallest weights, in absolute value, to zero. This `one-shot' method can sparsify over 90\% of the network's parameters with less than 1\% accuracy loss \cite{Liu_2015_CVPR, mao2017exploring}. We have reproduced some of these results. In Section 2, we describe our method and give the relevant theorems. In Section 3, we give experimental results and discuss the experimental setup. In Section 4, we conclude. 

\section{Method and Theoretic Results}
We consider the simplest pruning method that sets a subset of the network parameters to zero and does not do any further training or optimization, also known as `one-shot'. Also we consider standard training and no special sparsity inducing training such as \cite{louizos2017learning, zimmer2022compression, grigas2019stochastic, miao2021learning}. We propose Renormalized Pruning which is a rescaling of the sparsified parameters, see Algo. \ref{algo1}.

We prove that the Renormalized Pruning approximation error is bounded and the error is consistent in terms of concentrations of neural network parameters. Renormalized Pruning is accurate up to two terms of the variance or range of neural network parameters, $a_i$, and features (or embeddings), $\phi_i$, times the amount of pruning done, $M$. The error converges to zero at a linear rate. Then we show that standard pruning does not converge in general, in this scenario. Note that the $M$-sparse neural network, $\Phi_M$, need not remove the smallest $a_i$. This is very relevant in the very high sparsity regime, >90\% sparsity. At a neural network's initial randomization, neural networks produce random features for a given input and then linearly combine them in the final layer to make a prediction. We can write this approximation of function $f$ as $f(x) \approx \sum_i a_i \phi_i(x)$ where $f$ is the function to approximate or predict, $a_i\in\R$, and function $\phi_i$ is randomly sampled from a distribution, for each $i$. Random features, also known as random embeddings, are good approximators when $a_i$ is chosen optimally,
via $\{a_i\}_i = \arg\min_{\{a_i\}_i} \| f - \sum_i a_i \phi_i \| + \lambda \mathcal{R} (\{ a_i \}_i)$ with norm $\|\cdot\|$ and regularizer $\mathcal{R}$, see \cite{hashemi2021generalization, mei2022generalization}.

\begin{center}
    \begin{minipage}{.7\linewidth}
\begin{algorithm}[H]
	\caption{Renormalized Pruning}
	\label{algo1}
    \textbf{Input:} \\
\quad    $ \Phi $ : Neural Network \\
\quad    $prune()$ : pruning function \\
\quad    $\epsilon \in \R$ : positive pruning threshold \\
\quad    TrainNeuralNetwork() : Training method \\
\quad    GetParameters() : Returns network parameters \\
\quad    LoadParameters() : Loads network parameters \\
    \textbf{Output:} \\
\quad    $ \Phi $ : Sparse Neural Network \\
    \textbf{Begin:} \\
\quad    TrainNeuralNetwork($\Phi$) \\
\quad    $v$ = GetParameters($\Phi$) \\
\quad    $w = prune(v, \epsilon)$ \\
\quad    $w = \frac{\|v\|_0}{\|w\|_0} w$ \\
\quad    LoadParameters($\Phi,\ w$) \\
\end{algorithm}
    \end{minipage}
\end{center}

Let neural network $\Phi$ approximate function $f$ and write as $\Phi(x) = \sum_i a_i \phi_i(x)$. We call $\{a_i\}_i$ the parameters of the neural network and assume $N$ of the $\{a_i\}$ are nonzero. Let $\Phi_M$ be the sparse neural network after setting $M$ parameters to 0. Define $\mu_\phi$ to be the expectation of random variable $\phi$ and $\Delta_{\phi} = \max_\phi \|\phi-\mu_\phi\|$. 

\begin{theorem} \label{thm:1}
Let the renormalized network $\widehat \Phi_M := \frac{N}{N-M} \Phi_M$. 
Assume all $\phi_i$ are in the $\delta$ radius 2-Ball, $\phi_i \in B_{\delta}$, that is $\|\phi_i\|_{2} \le \delta$.
Let $a_i \in [\alpha, \beta]$ with $0<\alpha<\beta$ for each $i$ by absorbing the sign into $\phi_i$.
Let $\xi := \beta-\alpha$ and let $P$ be the set of indices pruned in $\Phi_M$. Then 

$$  \| \Phi - \widehat\Phi_M \|_2
\le 2 \beta \Delta_\phi M
+ \xi (\beta+\alpha)
\delta M/\alpha. $$

This goes to 0 as random variables $\phi_i$ and $a_i$ concentrate,

$$  \| \Phi - \widehat\Phi_M \|_2
 \le
 O(\Delta_\phi+\xi)
$$
as $\Delta_\phi + \xi \rightarrow 0$.

\end{theorem}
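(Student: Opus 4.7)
The plan is to start from the direct identity
\[
\Phi - \widehat{\Phi}_M \;=\; \sum_{i \in P} a_i \phi_i \;-\; \frac{M}{N-M}\sum_{j \notin P} a_j \phi_j,
\]
obtained by substituting $\widehat{\Phi}_M = \tfrac{N}{N-M}\Phi_M$ into $\Phi - \widehat{\Phi}_M$ and peeling off the pruned indices from $\Phi = \sum_i a_i \phi_i$. The key structural observation is that the rescaling factor exactly compensates counts: $\tfrac{M}{N-M} \cdot (N-M) = M$, so any contribution common to all $\phi_i$'s cancels. This motivates decomposing each feature via $\phi_i = \mu_\phi + (\phi_i - \mu_\phi)$.

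With that substitution, the difference splits into a ``mean'' piece $\mu_\phi \bigl[\sum_{i\in P} a_i - \tfrac{M}{N-M}\sum_{j\notin P} a_j\bigr]$ plus a ``deviation'' piece $\sum_{i\in P} a_i(\phi_i - \mu_\phi) - \tfrac{M}{N-M}\sum_{j\notin P} a_j(\phi_j - \mu_\phi)$. For the deviation piece, triangle inequality with $a_i \le \beta$ and $\|\phi_i - \mu_\phi\| \le \Delta_\phi$ gives at most $M\beta\Delta_\phi$ from each subsum (the second subsum's $(N-M)$ cancels against the rescaling factor), yielding exactly the first term $2\beta\Delta_\phi M$ of the claimed bound.

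For the mean piece, I would use that $\sum_{i\in P} a_i$ and $\tfrac{M}{N-M}\sum_{j\notin P} a_j$ each lie in $[M\alpha, M\beta]$, so their difference has absolute value at most $M\xi$, while $\|\mu_\phi\| \le \delta$ follows from Jensen applied to $\phi_i \in B_\delta$. This directly gives $M\xi\delta$; the theorem's explicit factor $(\beta+\alpha)/\alpha$ then emerges from a finer accounting that keeps $\alpha$ in the denominator, most naturally by re-parameterizing $a_i = \alpha c_i$ with $c_i \in [1,\beta/\alpha]$ and tracking the ratio $N/(N-M)$ through the renormalization, or by bounding the signed bracket with a ratio-of-sums argument. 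The corollary is then immediate: $M, \alpha, \beta, \delta$ are fixed, so each of the two terms on the right-hand side is $O(\Delta_\phi + \xi)$ as those two quantities tend to zero.

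The step I expect to be most delicate is producing the precise constant $\xi(\beta+\alpha)\delta M/\alpha$ rather than the cleaner $M\xi\delta$ that the straightforward mean-deviation split above yields: the slack must be absorbed into an $\alpha$-denominator, and selecting the decomposition that forces this form — as opposed to other valid splits (e.g.\ averaging over $P\times P^c$ via $M = \sum_{i\in P} 1$) that give tighter but differently-shaped constants — is the main technical choice driving the stated bound.
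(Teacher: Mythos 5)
Your proof is correct, and it takes a genuinely different route from the paper's. The paper distributes the renormalized kept-sum evenly over the $M$ pruned indices, writing $\| \sum_{i \in P} ( a_i \phi_i - \frac{\gamma}{M} \sum_{j \notin P} a_j \phi_j ) \|$ with $\gamma = M/(N-M)$, and then adds and subtracts $a_i \mu_\phi$ inside each summand; this forces it to control coefficient ratios via $|1 - a_j/a_i| \le \xi(\beta+\alpha)/(\alpha\beta)$, which is exactly where the $(\beta+\alpha)/\alpha$ factor in the stated bound originates. Your global mean--deviation split $\phi_i = \mu_\phi + (\phi_i - \mu_\phi)$ avoids ratios of coefficients altogether: the deviation piece yields $2\beta\Delta_\phi M$ just as in the paper (the factor $M/(N-M)$ cancels against the $N-M$ kept terms), and the mean piece yields $M\xi\delta$ because both scalar weights multiplying $\mu_\phi$ lie in $[M\alpha, M\beta]$ and $\|\mu_\phi\|_2 \le \delta$ by Jensen. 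One correction to your own closing assessment: the step you flag as most delicate --- manufacturing the exact constant $\xi(\beta+\alpha)\delta M/\alpha$ --- requires no finer accounting at all. Since $(\beta+\alpha)/\alpha = 1 + \beta/\alpha \ge 1$, your bound satisfies $M\xi\delta \le \xi(\beta+\alpha)\delta M/\alpha$ trivially, so your argument proves the theorem as stated with a strictly \emph{tighter} constant; the paper's larger factor is an artifact of its per-index comparison through $a_i\mu_\phi$, not a sharpness requirement, and no re-parameterization $a_i = \alpha c_i$ is needed. The asymptotic statement $O(\Delta_\phi + \xi)$ follows identically under both decompositions since $M$, $\delta$, $\beta$, and $(\beta+\alpha)/\alpha$ remain bounded as $\Delta_\phi + \xi \to 0$.
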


Proof in appendix. 

However, with this setting, without Renormalization Pruning, we do not get convergence, in general. 
We assume the $\phi$ is random. However, in practice we do train them for a couple epochs, specifically 20 epochs. 
Let pruning set $P$ contain pruned indices of $\Phi_M$. 

\begin{theorem} \label{thm:2}
Assume $\{\phi_i\}_i$ are sampled uniformly, i.i.d., from the $\delta$-sphere in dimension $D$ ($a_i$ can absorb the scalar). Let $a_i > 0$ by absorbing sign in $\phi_i$. Set $\epsilon$ such that $0 < \epsilon < 1$. For dimension $D$ large enough, with probability at least $1-C e^{-c D \epsilon^2/4}$, we have $|\langle \phi_i, \phi_j \rangle|<\epsilon$ for $i \neq j$ and constants $c$ and $C$, see concentrations on the sphere in \cite{ball1997elementary, matousek2013lectures, becker2016new}. 

In this case,

$$ \| \Phi - \Phi_M \|_2^2 
 \ge \delta^2 \sum_{i \in P} a_i^2 
- \epsilon M^2 \max_{i \in P} a_i^2. $$

\end{theorem}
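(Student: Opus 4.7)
The plan is to directly expand the squared error and separate diagonal from off-diagonal contributions. Since $P$ is the set of pruned indices, we have $\Phi - \Phi_M = \sum_{i\in P} a_i \phi_i$, so
\[
\|\Phi - \Phi_M\|_2^2 = \sum_{i\in P} a_i^2 \|\phi_i\|_2^2 + \sum_{\substack{i,j\in P\\ i\neq j}} a_i a_j \langle \phi_i,\phi_j\rangle.
\]
The diagonal part is exactly $\delta^2 \sum_{i\in P} a_i^2$ because every $\phi_i$ lies on the $\delta$-sphere. So the theorem reduces to lower-bounding (equivalently, upper-bounding the magnitude of) the cross-term sum on the event where near-orthogonality holds.

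Next, I would invoke the stated concentration fact: on the high-probability event, $|\langle \phi_i,\phi_j\rangle|<\epsilon$ for every $i\neq j$. Condition on this event throughout the rest of the argument. Then bound the cross-term in absolute value by
\[
\Bigl|\sum_{\substack{i,j\in P\\ i\neq j}} a_i a_j \langle \phi_i,\phi_j\rangle\Bigr|
\le \epsilon \sum_{\substack{i,j\in P\\ i\neq j}} a_i a_j
\le \epsilon \Bigl(\sum_{i\in P} a_i\Bigr)^2.
\]
Since $|P|=M$ and each $a_i \le \max_{i\in P} a_i$, the square of the sum is at most $M^2 \max_{i\in P} a_i^2$. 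Substituting this back into the expansion gives the claimed inequality.

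The steps are essentially mechanical, so there is no serious obstacle; the only conceptual point is that the lower bound holds on the concentration event, not unconditionally, and this should be stated explicitly. A minor sharpening would be to replace $(\sum_{i\in P}a_i)^2$ by $M \sum_{i\in P} a_i^2$ via Cauchy--Schwarz, but I would stick with the form the theorem states so that the inequality matches verbatim. The positivity assumption $a_i>0$ is what lets us drop the absolute values on the diagonal without further care; the argument would go through symmetrically if the signs were absorbed into the $\phi_i$ as the statement already does.
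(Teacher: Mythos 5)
Your proposal is correct and follows essentially the same route as the paper's proof: expand $\|\sum_{i\in P} a_i\phi_i\|_2^2$, identify the diagonal term as $\delta^2\sum_{i\in P}a_i^2$, and bound the cross terms by $\epsilon M^2 \max_{i\in P} a_i^2$ on the near-orthogonality event. Your version is in fact slightly cleaner, since you sum over ordered pairs without the paper's factor of $2$ (which only works out if its sum is read over unordered pairs) and you explicitly note that the bound is conditional on the concentration event.
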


Proof in appendix. 

\section{Experimental Results}
We test the Renormalized Pruning method, Algo. \ref{algo1}, against the standard pruning method. We compare on three real world datasets: MNIST \cite{mnist}, Fashion MNIST \cite{fasion_mnist}, and CIFAR-10 \cite{cifar10}. Our neural network architecture is a fully connected 3 layer neural network with most nodes in the first hidden layer. We use stochastic gradient descent with momentum to minimize cross entropy loss over 20 epochs. We sparsify the first hidden layer after training. In Figures \ref{fig:mnist}, \ref{fig:fmnist}, and \ref{fig:cifar}, we plot the training and testing accuracy after pruning the trained network (no re-training). The MNIST and Fashion MNIST neural networks have fully connected layers width 6000 then Relu then 30 then Relu then 10, the output layer; 
Input $\rightarrow 6,000 \rightarrow Relu \rightarrow 30 \rightarrow Relu \rightarrow 10= $ Output. 
The CIFAR 10 neural network has fully connected layers width 6000 then Relu then 300 then Relu then 10, the output layer; 
Input $\rightarrow 60,000 \rightarrow Relu \rightarrow 30 \rightarrow Relu \rightarrow 10= $ Output. 

\begin{figure}[h]
    \centering
    \includegraphics[width=9cm]{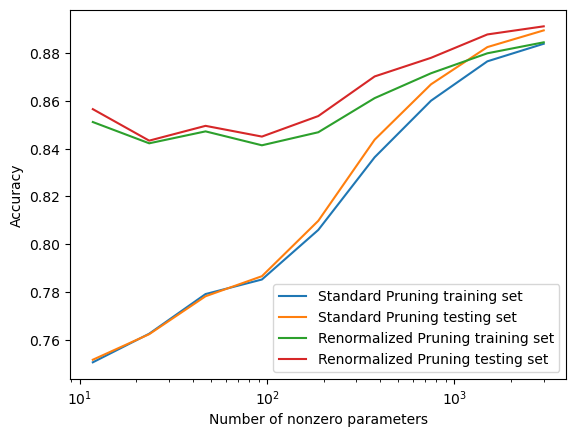}
    \caption{MNIST trained neural network. Plot of accuracy of neural network after sparsifying with versus without renormalization.}
    \label{fig:mnist}
\end{figure}

\begin{figure}[h]
    \centering
    \includegraphics[width=9cm]{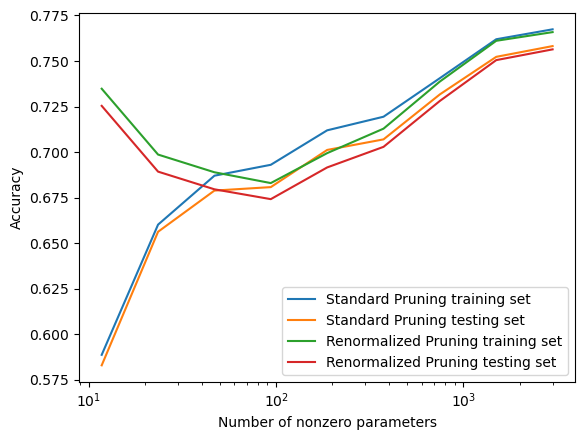}
    \caption{Fashion MNIST trained neural network. Plot of accuracy of neural network after sparsifying with versus without renormalization.}
    \label{fig:fmnist}
\end{figure}

\begin{figure}[h]
    \centering
    \includegraphics[width=9cm]{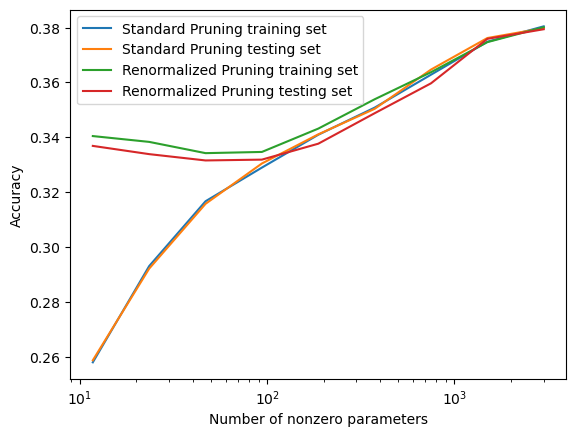}
    \caption{CIFAR-10 trained neural network. Plot of accuracy of neural network after sparsifying with versus without renormalization.}
    \label{fig:cifar}
\end{figure}

Note the logarithmic scales in the figures. We see that standard pruning decreases in accuracy quickly as sparsity increases, that is the number of nonzero parameters goes to zero. We also see that Renormalized Pruning maintains it's accuracy much better in the high sparsity regime. We have not tested this method on the largest neural networks or the largest datasets due to computation constraints but we expect the same result due to the theorems above. 

\section{Conclusion}

We have proposed the Renormalized Pruning method which is simple and fast to implement with the most common `one-shot' pruning method. We proved that Renormalized Pruning has error that goes to 0 with concentration of neural network parameters and feature embeddings. Then we proved that standard pruning (via absolute value) has large error that doesn't converge, with high probability. We, experimentally, see that the standard pruning accuracy decays in the very high sparsity regime, >90\% sparsity. We experimentally test this method on three real world datasets: MNIST, Fashion MNIST, and CIFAR-10. We see that Renormalized Pruning drastically improves accuracy for high pruning levels. We believe that this is an important improvement in sparse neural network theory and that it is simple enough for anyone to immediately use.


\bibliography{refs}


\appendix

\section{Appendix}

\begin{proof}[Proof of Theorem \ref{thm:1}]
Recall we let the renormalized $\widehat \Phi_M := \frac{N}{N-M} \Phi_M$. 
Assume all $\phi_i$ are in the $\delta$ radius 2-Ball, $\phi_i \in B_{\delta}$, that is $\|\phi_i\|_{2} \le \delta$.
Let $a_i \in [\alpha, \beta]$ with $0<\alpha<\beta$ for each $i$ by absorbing the sign into $\phi_i$.
Let $P$ be the set of indices pruned in $\Phi_M$. 

Set $\gamma=M/(N-M)$.

$$  \| \Phi - \widehat\Phi_M \|_2 
 = \| \Phi - (1+\gamma)\Phi_M \| $$
$$ =  \left\| \sum_i a_i \phi_i - (1+\gamma) \sum_{j \notin P} a_j \phi_j \right\| $$ 
$$ =  \left\| \sum_{i \in P} a_i \phi_i - \gamma \sum_{j \notin P} a_j \phi_j \right\| $$
$$ =  \left\| \sum_{i \in P} \left( a_i \phi_i - \frac{\gamma}{M} \sum_{j \notin P} a_j \phi_j \right) \right\| $$

$$ \le \sum_{i \in P} \left\| a_i \phi_i - \frac{\gamma}{M} \sum_{j \notin P} a_j \phi_j \right\| $$

$$ \le  \sum_{i \in P}  \left\| a_i \phi_i - \frac{\gamma}{M} \sum_{j \notin P} a_j \phi_j - a_i \mu_\phi + a_i \mu_\phi \right\|   $$

$$ \le  \sum_{i \in P} 
\left\| a_i \phi_i - a_i \mu_\phi \right\| 
+ \left\| a_i \mu_\phi - \frac{\gamma}{M} \sum_{j \notin P} a_j \phi_j \right\| $$

$$ \le \sum_{i \in P} 
a_i \Delta_{\phi}
+ \left\|  \sum_{j \notin P} \frac{\gamma}{M} a_j \phi_j - \frac{a_i}{(N-M)} \mu_\phi \right\| $$

$$ \le \sum_{i \in P} 
a_i \Delta_{\phi}
+ \left\| \frac{a_i}{N-M} \sum_{j \notin P} \frac{a_j}{a_i} \phi_j - \mu_\phi \right\| $$

$$ \le \sum_{i \in P} 
a_i \Delta_{\phi}
+  \frac{a_i}{N-M} \sum_{j \notin P} \left\| \frac{a_j}{a_i} \phi_j - \mu_\phi \right\| $$

$$ \le \sum_{i \in P} 
a_i \Delta_{\phi}
+  \frac{a_i}{N-M} \sum_{j \notin P} \left\| \phi_j - \frac{a_j}{a_i} \phi_j \right\|
+ \left\| \phi_j - \mu_\phi \right\| $$

$$ \le \sum_{i \in P} 
a_i \Delta_{\phi}
+ \frac{a_i}{N-M} \sum_{j \notin P} \left\| 1 - \frac{a_j}{a_i}  \right\| \|\phi_j\|
+ \Delta_{\phi} $$

$$ \le \sum_{i \in P} 
a_i \Delta_{\phi}
+ \frac{a_i}{N-M} \sum_{j \notin P} \left\| 1 - \frac{a_j}{a_i}  \right\| \delta
+ \Delta_{\phi} $$

$$ \le \sum_{i \in P} 
a_i \Delta_{\phi}
+ \frac{a_i}{N-M} \sum_{j \notin P} \frac{(\beta-\alpha)(\beta+\alpha)}{\alpha \beta} \delta
+ \Delta_{\phi} $$

$$ \le \sum_{i \in P} 
a_i \Delta_{\phi}
+ a_i  \frac{(\beta-\alpha)(\beta+\alpha)}{\alpha \beta} \delta
+ \Delta_{\phi} a_i$$

$$ \le 
 2 \beta \Delta_\phi M
+ (\beta-\alpha)(\beta+\alpha)
\delta M/\alpha
$$

$$ \le
 O(\Delta_\phi+\xi)
$$
as $\Delta_\phi + \xi \rightarrow 0$.

\end{proof}

\begin{proof}[Proof of Theorem \ref{thm:2}]
We have that 
$$ \| \Phi - \Phi_M \|_2^2 
= \| \sum_{i \in P} a_i \phi_i \|_2^2 $$
$$ = \sum_{i \in P} a_i^2 \langle \phi_i, \phi_i \rangle 
+  \sum_{i,j \in P, i \neq j} 2 a_i a_j \langle \phi_i, \phi_j \rangle $$
$$ = \delta^2 \sum_{i \in P} a_i^2
+ \sum_{i,j \in P, i \neq j} 2 a_i a_j  \langle \phi_i, \phi_j \rangle $$
$$ \ge \delta^2 \sum_{i \in P} a_i^2
- \sum_{i,j \in P, i \neq j} 2 a_i a_j \epsilon $$
$$ \ge \delta^2 \sum_{i \in P} a_i^2
- \epsilon M^2 \max_{i,j \in P, i \neq j} a_i a_j $$
$$ \ge \delta^2 \sum_{i \in P} a_i^2
- \epsilon M^2 \max_{i \in P} a_i^2. $$
\end{proof}

\end{document}